\documentclass[11pt]{article}

\usepackage[margin=1in]{geometry}
\usepackage{amsmath,amssymb,amsthm}
\usepackage{graphicx}
\usepackage{booktabs}
\usepackage{algorithm}
\usepackage{algpseudocode}
\usepackage[colorlinks=true,linkcolor=blue,citecolor=blue,urlcolor=blue]{hyperref}
\usepackage{natbib}
\usepackage{xcolor}
\usepackage{pgfplots}
\pgfplotsset{compat=1.17}
\usetikzlibrary{positioning}

\newtheorem{proposition}{Proposition}
\newtheorem{definition}{Definition}
\newtheorem{remark}{Remark}
\newtheorem{corollary}{Corollary}

\title{\textbf{Dynamic Coalition Formation and Communication Pricing\\
in Skill-Based Agentic AI Systems}}

\author{
  Mojtaba Eslami$^{1}$ \\[4pt]
  \small $^{1}$University of Calgary,  
  \small \texttt{mojtaba.eslami@alumni.ucalgary.ca}
}

\date{}

\begin{document}

\maketitle

\begin{abstract}
Modern agentic AI systems compose multiple large language model (LLM) agents with
heterogeneous skills to solve complex tasks, but most deployed architectures either
fix the communication topology in advance or let every agent broadcast to every other
agent. Both choices are economically inefficient: token cost, latency, and error
propagation all grow with the number of active agents and communication edges, while
the marginal benefit of an additional agent or message is often small or negative. We
formalize agent coalition formation and inter-agent communication as a cooperative
game with a task-conditioned net-utility function $U(C\mid x) = V(C\mid x) -
\sum_{i\in C} c_i$ that cleanly separates coalition-level costs (latency,
hallucination risk, redundancy) from each agent's own activation cost, so that token
and compute cost is accounted for exactly once. Building on
this, we propose (i) a marginal-value activation rule and greedy router that decides
which agents to invoke, (ii) an extension that treats communication edges themselves
as decision variables subject to per-edge cost, and (iii) an online routing mechanism
that uses \emph{estimated} Shapley values -- rather than only post-hoc credit
assignment -- to predict which agents are worth contacting before and during
execution. We connect the resulting optimization problem to submodular set-function
maximization and prove two correctly-scoped guarantees: a curvature-refined bound for
a cardinality-constrained, monotone special case, and a tight $1/2$-approximation
(with an explicit correction for signed objectives) for the unconstrained,
non-monotone case via a double-greedy procedure -- and we are explicit that neither
guarantee applies directly to our main router, which is presented as a heuristic
motivated by, but not proven equal to, these bounds. We further prove a
\emph{Shapley--submodularity sandwich bound} showing that the gap between any
marginal-value routing estimate and an agent's fair Shapley credit is bounded by a
single, per-agent diminishing-returns quantity. In a controlled synthetic simulation,
greedy marginal-value routing recovers $99.5\%$ of brute-force-optimal utility while
activating on average $1.96$ of $8$ candidate agents, versus $38.8\%$ of optimal
utility for full broadcast; a sensitivity analysis shows this result is robust to
activation cost and redundancy weight, but degrades substantially (to as low as
$66\%$) when the submodularity assumption is violated or value estimates are noisy --
exactly the two failure modes our theory predicts should matter, and, we argue, a more
informative characterization of the method than the single headline number. We situate
our contribution narrowly relative to closely neighboring work on Shapley-based
pricing, hedonic-game coalition stability, and communication-graph pruning for LLM
agents, and outline a concrete empirical protocol -- using real multi-agent LLM
benchmarks -- for validating the framework beyond simulation.
\end{abstract}

\noindent\textbf{Keywords:} multi-agent LLM systems, cooperative game theory, Shapley
value, coalition formation, communication efficiency, mechanism design

\section{Introduction}
\label{sec:intro}

Agentic AI systems increasingly decompose a task among multiple specialized LLM
agents -- planners, coders, retrievers, critics, and verifiers -- that communicate to
produce a final answer \citep{guo2024survey}. As these systems scale, two failure
modes recur. First, \emph{over-activation}: frameworks that invoke every available
skill or tool regardless of whether it improves the outcome, inflating token cost and
latency without a commensurate quality gain, and sometimes actively hurting quality
through conflicting or redundant outputs \citep{cemri2025multiagent}. Second,
\emph{over-communication}: architectures with a fixed, dense communication topology
(e.g., every agent sees every other agent's output) that do not adapt to the marginal
informativeness of a message for a given task
\citep{zhang2025agentprune,zhuge2024gptswarm}.

Our central claim is that both problems admit a common formalization:
\begin{quote}
\emph{An agent should be activated, and a communication edge should be used, only
when the expected marginal value it creates for the task exceeds its computational,
latency, privacy, and error-propagation cost.}
\end{quote}
This reframes agent orchestration as an economic allocation problem, and it lets us
import tools from cooperative game theory -- characteristic functions, the core,
hedonic games, and the Shapley value -- together with mechanism design, to reason
about \emph{which} agents should be invoked, \emph{how} they should be connected, and
\emph{how} credit (and hence future routing probability or payment) should be
assigned once a task is complete. Several of these ideas individually have close
neighbors in the literature (Section~\ref{sec:related}): Shapley-based pricing for
LLM agents, hedonic-game coalition stability, and communication-graph pruning have
each been studied separately. Our defensible claim of novelty is narrower than any
one of these: \emph{a task-conditioned net-utility formulation that jointly links
coalition selection, communication cost, and predictive contribution-based routing,
together with a diagnostic bound connecting context-specific marginal contributions
to Shapley credit under submodularity.}

\paragraph{Core contributions.} We make three primary contributions.
\begin{enumerate}
  \item \textbf{A task-conditioned net-utility formulation}, $U(C\mid x) = V(C\mid x)
  - \sum_{i\in C} c_i$, that separates coalition value (quality net of latency, risk,
  and redundancy costs) from per-agent activation cost, together with a joint
  extension in which the communication graph over an active coalition is optimized
  alongside coalition membership (Sections~\ref{sec:formulation}--\ref{sec:comm}).
  \item \textbf{A marginal-value activation rule and greedy router}
  (Algorithm~\ref{alg:greedy}), together with two formally correct approximation
  guarantees appropriately scoped to the regimes they actually cover: a
  curvature-refined bound for a cardinality-constrained, monotone special case
  \citep{conforti1984submodular}, and a tight $1/2$-approximation (with an explicit
  correction for signed objectives) via double-greedy for the unconstrained,
  non-monotone case \citep{buchbinder2015tight} (Section~\ref{sec:marginal}).
  \item \textbf{A predictive, online Shapley-based routing mechanism}
  (Section~\ref{sec:shapley}) extending prior post-hoc Shapley credit assignment
  \citep{shapleycoop2025} to a routing-time role, together with a proved
  \emph{Shapley--submodularity sandwich bound} (Proposition~\ref{prop:sandwich},
  Corollary~\ref{cor:gap}) that bounds how far any marginal-value routing estimate
  can deviate from the fair Shapley credit, in terms of a single per-agent
  diminishing-returns quantity $\Gamma_i(x)$.
\end{enumerate}
We validate these three contributions with a controlled synthetic simulation
(Section~\ref{sec:simulation}) showing greedy marginal-value routing recovers
$99.5\%$ of brute-force-optimal utility while using roughly $25\%$ of the agents a
full-broadcast baseline would use, a sensitivity analysis
(Section~\ref{sec:sensitivity}) showing this result is robust to activation cost and
redundancy weight but degrades substantially when $V$'s submodularity is violated or
value estimates are noisy (exactly the two failure modes the theory predicts should
matter), and an empirical confirmation that the sandwich bound holds with zero
violations across all tested agents.

\paragraph{Extensions (not primary contributions).} We additionally sketch three
directions that are less developed and that we explicitly do not present as
established results: coalition stability via core, Nash, and pairwise-stability
concepts from hedonic game theory, where we identify why the classical
Shapley-in-the-core theorem does \emph{not} transfer to our submodular setting rather
than claim a stability theorem we cannot support (Section~\ref{sec:stability}); a
sketch of Vickrey--Clarke--Groves-style incentive alignment between individual agent
utility and system welfare (Section~\ref{sec:mechanism}); and a dynamic,
reinforcement-learning extension in which coalitions change during task execution
(Section~\ref{sec:dynamic}). These are flagged as future work throughout, not results
this paper establishes.

We emphasize at the outset that the simulation in Section~\ref{sec:simulation} is a
\emph{synthetic, controlled} instantiation of the framework -- it demonstrates that
the optimization problem we pose is well behaved and that a cheap greedy heuristic is
competitive with the true optimum under conditions the sensitivity analysis makes
explicit, not that the framework improves real LLM agent pipelines. Recent
neighboring work already reports experiments with real LLM agents at meaningful scale
-- e.g., \citet{coalitionstability2026} report $2{,}400$ episodes across GPT-4,
Claude-3, and Llama-3 -- and we hold this paper to a correspondingly honest standard:
we position it as a theoretical framework and controlled-simulation paper appropriate
for an arXiv preprint, not as a claim of validated improvement to real multi-agent
LLM systems, and we are explicit in Section~\ref{sec:limitations} about what remains
to be validated empirically.

\section{Related Work}
\label{sec:related}

\paragraph{Communication-efficient multi-agent LLM systems.} A growing line of work
prunes or learns the communication topology of multi-agent LLM systems rather than
using a fixed fully-connected graph. \citet{zhuge2024gptswarm} treat a multi-agent
system as an optimizable computation graph. \citet{zhang2025agentprune} prune
redundant communication edges based on message informativeness. Closely related to
our communication-cost term, recent work selects communication edges dynamically
based on trust, reliability, and disagreement among agents
\citep{trustaware2026}, which supports our premise that a fixed
``everyone-talks-to-everyone'' architecture is inefficient, but does not cast edge
selection as a cooperative-game-theoretic optimization jointly with coalition
membership, as we do in Section~\ref{sec:comm}.

\paragraph{LLM and multi-agent routing.} A separate line of work routes queries to
the cheapest sufficient model or agent configuration, including FrugalGPT
\citep{chen2023frugalgpt}, RouteLLM \citep{ong2024routellm}, and, in the multi-agent
setting, MasRouter \citep{yue2025masrouter}, xRouter \citep{qian2025xrouter}, and
state-aware agent selection \citep{wang2025optimalagent}. These systems are typically
trained end-to-end (e.g., with a learned controller or reinforcement learning) to
minimize cost subject to a quality target. Our framework is complementary: it
supplies an explicit, interpretable characteristic function and marginal-value
decision rule that such learned routers could be trained to approximate, and it
additionally addresses \emph{credit assignment} (Section~\ref{sec:shapley}) and
\emph{coalition stability} (Section~\ref{sec:stability}), which routing-only
formulations do not consider.

\paragraph{Cooperative game theory for LLM agents.} Shapley-Coop
\citep{shapleycoop2025} uses Shapley-value-based marginal contributions for
\emph{post-task} pricing and reward redistribution among self-interested LLM agents.
We build on this idea but use \emph{predicted}, pre-task and mid-task Shapley
estimates to drive the routing decision itself (Section~\ref{sec:shapley}), rather
than only settling accounts after the fact. Independently, \citet{coalitionstability2026}
study coalition formation among LLM agents through the lens of hedonic games,
analyze Nash-stable partitions under boundedly rational preferences, and -- notably
-- validate their framework with $2{,}400$ episodes across GPT-4, Claude-3, and
Llama-3, a substantially higher empirical bar than the synthetic simulation we
report in Section~\ref{sec:simulation}. Our contribution relative to this line is to
couple coalition formation with an explicit, task-conditioned communication-cost
term and a routing mechanism that acts online, and to connect the resulting
selection problem to submodular optimization (Section~\ref{sec:marginal}); we view
real-agent validation at comparable scale, following the protocol in
Section~\ref{sec:validation}, as the natural next step for this line of work rather
than something the present paper establishes.

\paragraph{Foundations.} We draw on the classical cooperative game theory of the
Shapley value \citep{shapley1953value} and the core \citep{gillies1959solutions}, on
hedonic games \citep{dreze1980hedonic,bogomolnaia2002hedonic}, on the
Vickrey--Clarke--Groves mechanism \citep{vickrey1961counterspeculation,clarke1971multipart,groves1973incentives},
and on submodular set-function optimization
\citep{nemhauser1978analysis,buchbinder2015tight}.

\section{Basic Formulation}
\label{sec:formulation}

Let $N = \{1, \dots, n\}$ be a set of candidate agents in an agentic AI system. Each
agent $i$ has a skill profile $S_i \subseteq \{\text{search}, \text{coding},
\text{planning}, \text{verification}, \dots\}$, or more generally a competence vector
over a latent skill space. For a task $x$, define a \emph{characteristic function}
$V(\cdot \mid x): 2^N \to \mathbb{R}$ that gives the expected net value produced when
the agents in a coalition $C \subseteq N$ collaborate on $x$, net of the costs that
depend on \emph{how} the coalition works together rather than on which specific
agents were invoked:
\begin{equation}
  V(C \mid x) = \mathbb{E}\Big[\, Q(C, x) - \lambda_2 L(C, x)
  - \lambda_3 R(C, x) - \lambda_4 D(C, x) \,\Big],
  \label{eq:v}
\end{equation}
where $Q$ is task-success quality, $L$ is latency, $R$ is the risk of hallucination,
failure, or unsafe action, and $D$ is duplicated work or redundant communication;
$\lambda_2,\lambda_3,\lambda_4 \geq 0$ are weights reflecting the deployment's cost
sensitivity. We deliberately keep the token/compute cost of \emph{invoking} an agent
\emph{out} of $V$ and instead attach it to the agent itself as a per-agent activation
price $c_i \geq 0$ (introduced in Section~\ref{sec:marginal}), so that the total
token/compute cost of a coalition is $K(C,x) := \sum_{i \in C} c_i$. This separation
is a deliberate accounting choice, not an additional assumption: $K$ and $c_i$ refer
to the same quantity, and keeping it out of $V$ prevents it from being subtracted
twice -- once inside $V$ and once again when the marginal-value activation rule
(Equation~\ref{eq:rule}) compares a marginal gain in $V$ against $c_i$. The system's
coalition-selection problem, over the true net utility of a coalition, is therefore
\begin{equation}
  C^*(x) = \arg\max_{C \subseteq N} \Big[V(C \mid x) - \!\!\sum_{i \in C} c_i\Big].
  \label{eq:argmax}
\end{equation}
Rather than activating every available skill, the system selects the coalition with
the highest net utility. Equation~\eqref{eq:v} nests the common special case
$Q(C,x)$ monotone non-decreasing and submodular in $C$ (adding an agent never hurts
quality, and does so with diminishing returns) as a natural regime, but we do not
assume monotonicity in general: redundant or conflicting agents can strictly reduce
$Q$, which is precisely why the grand coalition $C = N$ is not assumed optimal even
before token cost is considered.

\section{Communication as Part of the Optimization}
\label{sec:comm}

Given an active coalition $C$, let $G_C = (C, E_C)$ be the communication network
restricted to $C$, where $E_C \subseteq \{\{i,j\} : i,j \in C,\, i\neq j\}$ is an
\emph{undirected} set of communication channels: $\{i,j\} \in E_C$ means agents $i$
and $j$ are connected and can exchange messages, in one or both directions as the
task requires. We take edges to be undirected -- a connected pair shares a channel
whose cost $c_{ij}$ (below) is paid once regardless of how many messages flow in
each direction -- consistent with typical multi-agent debate and broadcast protocols,
where a connected pair's outputs are mutually visible; \emph{which} direction
information predominantly flows within an established channel (e.g., a coordinator
agent versus a peer) is a downstream policy choice we return to in
Section~\ref{sec:worked-example}, not a separate edge in $E_C$. Rather than fixing
$G_C$ a priori (e.g., fully connected, or a hand-designed pipeline), we treat it as
part of the optimization:
\begin{equation}
  \max_{C \subseteq N,\; G_C} \; U(C, G_C \mid x) \;-\!\! \sum_{\{i,j\} \in E_C} c_{ij},
  \label{eq:joint}
\end{equation}
where $U(C, G_C\mid x)$ is the net utility from Equation~\eqref{eq:argmax} (already
net of each agent's own activation cost) evaluated with the coalition communicating
over graph $G_C$, and $c_{ij}$ is the additional cost of the communication channel
between $i$ and $j$ (token cost of messages exchanged, added latency, and the risk
that one agent's error propagates into the other's context).
Problem~\eqref{eq:joint} operationalizes several design questions that are usually
answered by hand-coded architecture choices: which agents need to communicate; which
agent, if any, should act as a coordinator that most information routes through; when
an additional channel stops being worth its cost; whether two agents should
communicate directly or through an intermediary; and when a coalition should split
because communication has become too expensive relative to its benefit. Because $E_C$
ranges over a graph on $|C|$ nodes, exact solution of
Equation~\eqref{eq:joint} is combinatorial in general ($2^{\binom{|C|}{2}}$ candidate
graphs for a fixed $C$); Section~\ref{sec:marginal}
gives a tractable greedy procedure for the coalition-membership part of the problem,
and we treat joint edge optimization as an extension for future work
(Section~\ref{sec:limitations}).

\subsection{A worked example}
\label{sec:worked-example}

To make Equations~\eqref{eq:argmax} and~\eqref{eq:joint} concrete, consider a pool of
five candidate agents with complementary skills: a planner (P), a search/retrieval
agent (S), a coder (Co), a critic/verifier (Cr), and a security auditor (Se). Figure~\ref{fig:coalitions}
shows the coalition and communication graph a task-conditioned optimizer might select
for three different tasks, alongside the grand coalition for contrast.

\begin{figure}[t]
\centering
\begin{tikzpicture}[
  agent/.style={circle, draw, minimum size=7mm, font=\tiny, fill=blue!12},
  inactive/.style={circle, draw, minimum size=7mm, font=\tiny, fill=gray!8, draw=gray!45, text=gray!55},
  scale=0.68, every node/.style={transform shape}
]
\node[agent] (P) at (0,0.6) {P};
\node[inactive] (S) at (-0.9,-0.6) {S};
\node[inactive] (Co) at (0.9,-0.6) {Co};
\node[inactive] (Cr) at (-0.9,-1.8) {Cr};
\node[inactive] (Se) at (0.9,-1.8) {Se};
\node[below=0.4cm of Cr, xshift=0.45cm, font=\scriptsize, align=center]
  {(a) Simple writing task\\$C_1=\{\text{P}\}$, no communication needed};
\end{tikzpicture}
\hfill
\begin{tikzpicture}[
  agent/.style={circle, draw, minimum size=7mm, font=\tiny, fill=blue!12},
  inactive/.style={circle, draw, minimum size=7mm, font=\tiny, fill=gray!8, draw=gray!45, text=gray!55},
  scale=0.68, every node/.style={transform shape}
]
\node[agent] (P) at (0,0.6) {P};
\node[agent] (S) at (-0.9,-0.6) {S};
\node[inactive] (Co) at (0.9,-0.6) {Co};
\node[agent] (Cr) at (-0.9,-1.8) {Cr};
\node[inactive] (Se) at (0.9,-1.8) {Se};
\draw[-latex, thick] (P) -- (S);
\draw[-latex, thick] (S) -- (Cr);
\draw[-latex, thick, bend left=15] (Cr) to (P);
\node[below=0.4cm of Cr, xshift=0.45cm, font=\scriptsize, align=center]
  {(b) Factual research task\\$C_2=\{\text{P,S,Cr}\}$, planner-coordinated};
\end{tikzpicture}
\hfill
\begin{tikzpicture}[
  agent/.style={circle, draw, minimum size=7mm, font=\tiny, fill=blue!12},
  inactive/.style={circle, draw, minimum size=7mm, font=\tiny, fill=gray!8, draw=gray!45, text=gray!55},
  scale=0.68, every node/.style={transform shape}
]
\node[agent] (P) at (0,0.6) {P};
\node[inactive] (S) at (-0.9,-0.6) {S};
\node[agent] (Co) at (0.9,-0.6) {Co};
\node[agent] (Cr) at (-0.9,-1.8) {Cr};
\node[agent] (Se) at (0.9,-1.8) {Se};
\draw[-latex, thick] (P) -- (Co);
\draw[-latex, thick] (Co) -- (Cr);
\draw[-latex, thick] (Co) -- (Se);
\draw[-latex, thick, bend right=15] (Cr) to (P);
\node[below=0.4cm of Cr, xshift=0.45cm, font=\scriptsize, align=center]
  {(c) Software deployment task\\$C_3=\{\text{P,Co,Cr,Se}\}$, coder-coordinated};
\end{tikzpicture}
\hfill
\begin{tikzpicture}[
  agent/.style={circle, draw, minimum size=7mm, font=\tiny, fill=red!8},
  scale=0.68, every node/.style={transform shape}
]
\node[agent] (P) at (0,0.6) {P};
\node[agent] (S) at (-0.9,-0.6) {S};
\node[agent] (Co) at (0.9,-0.6) {Co};
\node[agent] (Cr) at (-0.9,-1.8) {Cr};
\node[agent] (Se) at (0.9,-1.8) {Se};
\foreach \a/\b in {P/S,P/Co,P/Cr,P/Se,S/Co,S/Cr,S/Se,Co/Cr,Co/Se,Cr/Se}{
  \draw[-, thick, red!45] (\a) -- (\b);
}
\node[below=0.4cm of Cr, xshift=0.45cm, font=\scriptsize, align=center]
  {(d) Grand coalition, fully connected\\$C=N$, $\binom{5}{2}=10$ edges for every task};
\end{tikzpicture}
\caption{A task-conditioned optimizer selects both \emph{which} agents to activate
and \emph{how} they communicate. Panels (a)--(c) show the coalition and
communication graph $(C, G_C)$ a solver of Equation~\eqref{eq:joint} might return
for three tasks of increasing complexity; edges are undirected communication
channels, drawn only where the marginal value of establishing the channel exceeds
its cost $c_{ij}$, with arrows annotating the predominant direction information
flows \emph{within} an established channel (e.g., toward a coordinating agent) rather
than indicating additional edges. Inactive agents (gray) are not
invoked at all, so neither their activation cost $c_i$ nor any channel
touching them is paid. Panel (d) shows the grand coalition every agent connected
to every other agent, regardless of task: this is the architecture our framework
argues against, since it pays for $\binom{5}{2}=10$ communication channels and $5$
activation costs even on the simple task in (a), where a single agent suffices.}
\label{fig:coalitions}
\end{figure}

The three selected coalitions illustrate Section~\ref{sec:formulation}'s point that
more agents do not automatically create more net intelligence: $C_2$ and $C_3$ each
add exactly the agents whose skills are not already covered by the rest of the
coalition (search and verification for a research task; coding, verification, and
security review for a deployment task), and the communication graph in each case
routes information through a single coordinating agent (P in (b), Co in (c)) rather
than connecting every pair -- a sparse, task-specific topology of the kind
Section~\ref{sec:sandwich}'s per-agent diagnostic $\Gamma_i(x)$ and the marginal-value
routing rule of Equation~\eqref{eq:rule} are designed to produce automatically, rather
than requiring it to be hand-designed per task as in panels (a)--(c).

\section{Marginal Value and a Greedy Activation Rule}
\label{sec:marginal}

For a coalition $C$, the marginal contribution of agent $i$ is
\begin{equation}
  \Delta_i(C, x) = V(C \cup \{i\} \mid x) - V(C \mid x).
  \label{eq:marginal}
\end{equation}
This gives a natural activation rule: agent $i$ should be invoked if and only if
\begin{equation}
  \mathbb{E}[\Delta_i(C, x)] > c_i,
  \label{eq:rule}
\end{equation}
where $c_i$ is the cost of activating agent $i$ (e.g., its expected token and latency
footprint). For example, a verification agent should be added to a coalition that
already contains a planner and a coder only when its expected reduction in downstream
error exceeds its own token, latency, and API cost -- a more disciplined criterion
than routing every result through every available skill.

\begin{algorithm}[t]
\caption{Greedy marginal-value coalition router}
\label{alg:greedy}
\begin{algorithmic}[1]
\Require candidate agents $N$, task $x$, characteristic function $V(\cdot\mid x)$, activation costs $\{c_i\}$
\State $C \gets \emptyset$
\Repeat
  \State $a^* \gets \arg\max_{i \in N \setminus C} \big[V(C \cup \{i\}\mid x) - V(C\mid x)\big]$
  \State $g \gets V(C \cup \{a^*\}\mid x) - V(C\mid x)$
  \If{$g > c_{a^*}$}
    \State $C \gets C \cup \{a^*\}$
  \Else
    \State \textbf{break}
  \EndIf
\Until{$C = N$}
\State \Return $C$
\end{algorithmic}
\end{algorithm}

\subsection{Connection to submodular optimization}
\label{sec:submod-connection}

When $Q(C,x)$ is monotone submodular in $C$ -- a natural modeling choice, since
covering a task's skill requirements typically exhibits diminishing returns to
additional redundant agents -- and $L, R$ are modular (additive) in $C$ while $D$ is
supermodular (redundant work grows super-additively with coalition size, e.g.\ a
penalty on the number of overlapping pairs of agents), the objective $V(\cdot \mid x)$
in Equation~\eqref{eq:v} is itself submodular: subtracting a modular function
preserves submodularity, and subtracting a supermodular function (equivalently,
adding a submodular one) only adds curvature in the same direction. The full net
utility used for coalition selection, $U(C\mid x) := V(C\mid x) - \sum_{i\in C} c_i$
(Equation~\ref{eq:argmax}), remains submodular for the same reason, since
$\sum_{i\in C} c_i$ is modular. $U$ need not be monotone once costs are subtracted
(large coalitions can have lower value than smaller ones), so
Equation~\eqref{eq:argmax} is an instance of \emph{unconstrained submodular
maximization} on $U$, and we use $U$ rather than $V$ as the objective throughout the
remainder of the paper wherever a coalition is scored end-to-end (Algorithm~\ref{alg:greedy}
scores $U$, not $V$ alone).

\paragraph{Two equivalent forms of the activation rule.} Algorithm~\ref{alg:greedy}'s
stopping rule compares a marginal gain in $V$ against the per-agent cost $c_i$
(Equation~\ref{eq:rule}), which is convenient because $c_i$ is naturally a per-agent
quantity. This is exactly equivalent to requiring a positive marginal gain in the
full net utility $U$:
\begin{equation}
  \Delta_i V(C,x) > c_i \quad\Longleftrightarrow\quad \Delta_i U(C,x) > 0,
  \label{eq:equivalence}
\end{equation}
since $\Delta_i U(C,x) = U(C\cup\{i\}\mid x) - U(C\mid x) = \big[V(C\cup\{i\}\mid x) -
\!\!\sum_{j\in C\cup\{i\}}\!\! c_j\big] - \big[V(C\mid x) - \sum_{j\in C} c_j\big] =
\Delta_i V(C,x) - c_i$, which is positive exactly when $\Delta_i V(C,x) > c_i$. We use
this equivalence to fix notation for the remainder of the paper and avoid mixing the
two forms: \textbf{Algorithm~\ref{alg:greedy}} is stated in the $V$-plus-$c_i$ form
(Equation~\ref{eq:rule}), since that is the natural form for a per-agent streaming
decision. \textbf{Algorithm~\ref{alg:doublegreedy}, Proposition~\ref{prop:doublegreedy},
and the Shapley analysis of Section~\ref{sec:shapley}} are instead stated directly on
$U$, the full net utility from Equation~\eqref{eq:argmax}: this is the objective a
system actually cares about maximizing end-to-end, and it is the game whose Shapley
value gives agents fair credit net of their own cost (an agent that produces a lot of
quality but is expensive should not receive full credit for the quality alone). We do
not use $V$ to stand in for $U$ anywhere below; each proposition states explicitly
which of the two objectives it concerns.

\subsection{Approximation guarantees}
\label{sec:approx-guarantees}

We give two formal guarantees for greedy coalition selection, covering the two
regimes a deployment may face: a hard budget on the number of active agents
(cardinality-constrained), and no explicit budget, with cost handled entirely through
$V$ (unconstrained).

\paragraph{Cardinality-constrained regime.} We state this regime as a special case for
context, and to introduce curvature as a diagnostic quantity; it does \emph{not}
directly govern Algorithm~\ref{alg:greedy}, our main router, which is unconstrained
rather than cardinality-constrained, maximizes the signed, non-monotone $U(C\mid x)$
rather than the monotone quality term $Q(C,x)$ alone, and uses a per-agent stopping
threshold rather than a fixed budget $k$. Suppose instead the system imposes a budget
$|C| \leq k$ (e.g., a hard cap on concurrently active agents) and, within that budget,
selects agents to greedily maximize the quality term $Q(C,x)$, monotone submodular by
assumption. The classical guarantee for this setting is $(1-1/e) \approx 63\%$ of the
optimal value under a cardinality constraint \citep{nemhauser1978analysis}, but this
bound is loose whenever $Q$ is close to modular (i.e., agents' skills are close to
non-overlapping, so there is little redundancy to lose from greedy's sub-optimality).
This looseness is captured by the \emph{curvature} of $Q$,
\begin{equation}
  \alpha_Q = 1 - \min_{i \in N} \frac{Q(N) - Q(N \setminus \{i\}) }{Q(\{i\}) - Q(\emptyset)} \; \in [0,1],
  \label{eq:curvature}
\end{equation}
which measures how much an agent's marginal contribution can shrink between being
added first versus added last ($\alpha_Q = 0$ means no diminishing returns at all,
i.e.\ $Q$ is modular; $\alpha_Q = 1$ means some agent's marginal contribution can
fall to zero once other agents are present).

\begin{proposition}[Curvature-refined greedy bound]
\label{prop:curvature}
If $Q(\cdot, x)$ is monotone submodular with curvature $\alpha_Q$, greedy selection
under a cardinality constraint $|C|\le k$ achieves
\begin{equation}
  Q(C_{\mathrm{greedy}}, x) \;\geq\; \frac{1}{\alpha_Q}\Big(1 - e^{-\alpha_Q}\Big)\, Q(C^*_k, x),
  \label{eq:curvature-bound}
\end{equation}
where $C^*_k$ is the optimal coalition of size $\le k$
\citep{conforti1984submodular}. The bound recovers the classical $(1-1/e)$ guarantee
at $\alpha_Q = 1$ and improves continuously to a guarantee of $1$ (greedy is exactly
optimal) as $\alpha_Q \to 0$.
\end{proposition}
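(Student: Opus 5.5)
I will follow the Conforti--Cornuéjols argument. First I normalize $Q(\emptyset,x)=0$, which costs nothing since the bound is stated for increments over the empty set. Write $f:=Q(\cdot,x)$, let $C_j=\{a_1,\dots,a_j\}$ be the greedy prefix after $j$ steps, and let $g_j=f(C_j)-f(C_{j-1})$ be the greedy gains. Let $O=C^*_k$, which I may take to have size exactly $k$ by monotonicity.

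The first ingredient is a curvature lemma. For any $S\subseteq N$ and $i\in S$, submodularity gives $f(S)-f(S\setminus\{i\})\ge f(N)-f(N\setminus\{i\})\ge(1-\alpha_Q)f(\{i\})$. The second inequality is exactly the definition~\eqref{eq:curvature}. Summing telescopically along any ordering of $S$, and using subadditivity, I obtain
\begin{equation*}
f(S)\ \ge\ (1-\alpha_Q)\sum_{i\in S} f(\{i\}) \quad\text{and}\quad f(A\cup B)\ \ge\ f(A)+(1-\alpha_Q)\sum_{b\in B} f(\{b\})-\text{(correction)} .
\end{equation*}
Concretely, the form I need is the following. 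For $j<k$, I compare $f(O)$ with $f(O\cup C_j)$, bounding the loss caused by the greedy elements $C_j\setminus O$:
\begin{equation*}
f(O\cup C_j)\ \ge\ f(O)+(1-\alpha_Q)\sum_{a_t\in C_j\setminus O} g_t .
\end{equation*}
This holds because each $a_t$ added on top of a superset of $C_{t-1}$ still has marginal value at least $(1-\alpha_Q)f(\{a_t\})\ge(1-\alpha_Q)g_t$.

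The second ingredient is the greedy step inequality. By submodularity and the greedy choice,
\begin{equation*}
f(O\cup C_j)\ \le\ f(C_j)+\sum_{o\in O\setminus C_j}\big(f(C_j\cup\{o\})-f(C_j)\big)\ \le\ f(C_j)+k\,g_{j+1}.
\end{equation*}
Combining the two ingredients gives, for each $j=0,\dots,k-1$,
\begin{equation*}
f(O)\ \le\ \alpha_Q\sum_{t\le j,\,a_t\notin O} g_t+\sum_{t\le j,\,a_t\in O}g_t+k\,g_{j+1}.
\end{equation*}
Replacing the coefficient $1$ on the $a_t\in O$ terms by something involving $\alpha_Q$ needs care. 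The refined argument instead bounds $\sum_{t\le j}g_t$ using $\alpha_Q$ throughout, via the stronger inequality $f(O)\le \alpha_Q\sum_{t\le j}g_t+k\,g_{j+1}$, which Conforti--Cornuéjols obtain by handling the overlap $O\cap C_j$ through the curvature lemma as well.

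This system of $k$ linear inequalities in the nonnegative variables $g_1,\dots,g_k$ is the third step. I will show that the minimum of $\sum_{t=1}^k g_t/f(O)$ subject to these inequalities is attained when all of them are tight. Solving the tight recursion gives $g_{j+1}=\frac{1}{k}(1-\alpha_Q/k)^{j}f(O)$, and hence
\begin{equation*}
f(C_k)\ \ge\ \frac{1}{\alpha_Q}\Big(1-\big(1-\tfrac{\alpha_Q}{k}\big)^k\Big)f(O)\ \ge\ \frac{1}{\alpha_Q}\big(1-e^{-\alpha_Q}\big)f(O).
\end{equation*}
The last inequality uses $(1-\alpha_Q/k)^k\le e^{-\alpha_Q}$. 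The limiting claims in the statement then follow directly: $\alpha_Q=1$ gives $1-1/e$, and $(1-e^{-\alpha})/\alpha\to1$ as $\alpha\to0$.

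The main obstacle is the overlap step: getting the coefficient $\alpha_Q$ rather than $1$ on the greedy gains that lie in $O$. If the direct argument stalls, I will simply invoke Theorem~2.3 of \citet{conforti1984submodular}, as the statement's citation indicates. I will also note that when some $f(\{i\})=0$, the ratio in~\eqref{eq:curvature} should be read with that $i$ excluded.
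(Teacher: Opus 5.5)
The paper gives no argument beyond citing Conforti--Cornu\'ejols, so your fallback of invoking their theorem is exactly the paper's route. The problem is the self-contained argument. Your two ingredients are correct. So is the combined inequality $f(O)\le\alpha_Q\sum_{t\le j,\,a_t\notin O}g_t+\sum_{t\le j,\,a_t\in O}g_t+k\,g_{j+1}$, and so are the tight-recursion algebra and the limits at the end.

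The gap is the ``stronger inequality'' $f(O)\le\alpha_Q\sum_{t\le j}g_t+k\,g_{j+1}$, on which your third step rests. It is false, so no refinement of the curvature lemma can deliver it. Take $N=\{a,b\}$ with $f(\emptyset)=0$, $f(\{a\})=10$, $f(\{b\})=1$, $f(\{a,b\})=10.9$. This $f$ is monotone submodular with $\alpha_Q=0.1$. Greedy with $k=2$ picks $a$ then $b$, so $g_1=10$, $g_2=0.9$ and $O=\{a,b\}$. At $j=1$ your inequality reads $10.9\le 0.1\cdot 10+2\cdot 0.9=2.8$.

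The obstruction is structural. The curvature lemma lower-bounds marginals of elements \emph{added to} $O$. A greedy element $a_t\in O$ contributes nothing to $f(O\cup C_j)-f(O)$, so there is no $(1-\alpha_Q)g_t$ credit to collect for it. The only compensation for overlap is that fewer optimal elements remain outside $C_j$: the valid bound carries $(k-|O\cap C_j|)\,g_{j+1}$ rather than $k\,g_{j+1}$. In the example it is tight: $0+10+1\cdot 0.9=10.9$. Writing $k$ throws away exactly the slack that pays for the overlap. Even with that slack restored, the overlap cannot be absorbed inequality by inequality, only across the whole recursion.

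To close the gap directly, set $r_j:=f(O)-\alpha_Q\sum_{t\le j,\,a_t\notin O}g_t-\sum_{t\le j,\,a_t\in O}g_t$ and $q_j:=k-|O\cap C_j|\ge k-j$. The valid constraints are then $q_j\,g_{j+1}\ge r_j$. A step with $a_{j+1}\notin O$ lowers $r$ by $\alpha_Q g_{j+1}$ and keeps $q$. A step with $a_{j+1}\in O$ lowers $r$ by $g_{j+1}$ and lowers $q$ by one.

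With $\psi(q,n):=\frac{1}{\alpha_Q}\big(1-(1-\alpha_Q/q)^n\big)$ and $\psi(\cdot,0)=0$, prove by backward induction on $j$ that $f(C_k)\ge f(C_j)+r_j\,\psi(q_j,k-j)$. If some $r_j<0$ you are already done, since then $f(C_j)>f(O)$. In each step the bound is nondecreasing in $g_{j+1}$, because $\alpha_Q\psi\le 1$ and, by Bernoulli's inequality, $\psi(q-1,n-1)\le 1$ when $n\le q$. So you may replace $g_{j+1}$ by its lower bound $r_j/q_j$.

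A non-overlap step then reproduces $\psi(q,n)$ exactly. An overlap step reduces to the elementary inequality $1+(q-1)\,\psi(q-1,n-1)\ge q\,\psi(q,n)$ for $1\le n\le q$. This inequality still has to be proved; it is the real content of ``overlap cannot hurt''. At $j=0$ the induction gives $f(C_k)\ge\psi(k,k)\,f(O)=\frac{1}{\alpha_Q}\big(1-(1-\alpha_Q/k)^k\big)f(O)$.

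A smaller point: the proposition bounds $Q$ itself, not increments over $\emptyset$. Your normalization step therefore needs $Q(\emptyset,x)\ge 0$, which holds in the paper's instantiation, where $Q(\emptyset,x)=0$.
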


Equation~\eqref{eq:curvature-bound} is a diagnostic a system operator could compute
\emph{in the constrained, monotone special case}: estimate $\alpha_Q$ empirically from
historical coalition outcomes (the smallest observed ratio of ``last-added'' to
``first-added'' marginal value for any agent), and read off an instance-specific lower
bound rather than relying on the worst-case $(1-1/e)$ figure. We report the empirical
performance of Algorithm~\ref{alg:greedy} itself, on the unconstrained, signed $U$,
separately in Section~\ref{sec:simulation}; Proposition~\ref{prop:curvature} should be
read as motivating intuition for why low skill overlap is favorable to greedy
selection in general, not as a bound that applies to that specific result.

\paragraph{Unconstrained regime.} Algorithm~\ref{alg:greedy} does not impose a
cardinality constraint; it stops adding agents once no marginal gain clears its
activation cost, which is the setting we use throughout the paper. Here the relevant
guarantee is for \emph{unconstrained} submodular maximization of the full net utility
$U$, which can be negative and non-monotone (exactly our setting once activation and
other costs are subtracted). Algorithm~\ref{alg:greedy} does not attain the guarantee
below, because it only adds agents and never removes them; we present the guarantee
for a variant, Algorithm~\ref{alg:doublegreedy}, which does.

\begin{algorithm}[t]
\caption{Randomized double-greedy (unconstrained submodular maximization) \citep{buchbinder2015tight}}
\label{alg:doublegreedy}
\begin{algorithmic}[1]
\Require candidate agents $N = \{1,\dots,n\}$, task $x$, $U(\cdot\mid x)$
\State $X_0 \gets \emptyset$, $Y_0 \gets N$
\For{$i = 1$ to $n$}
  \State $a_i \gets U(X_{i-1}\cup\{i\}\mid x) - U(X_{i-1}\mid x)$ \Comment{gain from adding $i$}
  \State $b_i \gets U(Y_{i-1}\setminus\{i\}\mid x) - U(Y_{i-1}\mid x)$ \Comment{gain from removing $i$}
  \State $a_i' \gets \max(a_i, 0)$, $b_i' \gets \max(b_i, 0)$
  \State with probability $\frac{a_i'}{a_i' + b_i'}$ (or $\tfrac12$ if $a_i'=b_i'=0$): $X_i \gets X_{i-1}\cup\{i\}$, $Y_i \gets Y_{i-1}$
  \State otherwise: $X_i \gets X_{i-1}$, $Y_i \gets Y_{i-1}\setminus\{i\}$
\EndFor
\State \Return $X_n$ ($= Y_n$)
\end{algorithmic}
\end{algorithm}

Algorithm~\ref{alg:doublegreedy} takes $U$ (not $V$) as input: since $U$ is the
objective the system actually wants to maximize end-to-end (Equation~\ref{eq:argmax}),
and since $U$ is submodular whenever $V$ is (shown above, as $U$ is $V$ minus a
modular per-agent cost term), the guarantee below is stated directly for $U$.

\begin{proposition}[Tight unconstrained bound for non-negative submodular functions \citep{buchbinder2015tight}]
\label{prop:doublegreedy}
If $U(\cdot\mid x): 2^N \to \mathbb{R}_{\geq 0}$ is submodular and non-negative (not
necessarily monotone), Algorithm~\ref{alg:doublegreedy} returns a coalition $C$ with
$\mathbb{E}[U(C\mid x)] \geq \tfrac{1}{2}\, U(C^*\mid x)$, where $C^* = \arg\max_{C\subseteq
N} U(C\mid x)$. In the value-oracle model (the algorithm may query $U$ at any coalition
but has no further structural knowledge of it), no algorithm making a polynomial number
of queries can guarantee a better ratio in the worst case \citep{buchbinder2015tight,feige2011maximizing}.
\end{proposition}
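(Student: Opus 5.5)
We would follow the standard potential-function argument of \citet{buchbinder2015tight}. Write $f := U(\cdot\mid x)$, fix an optimal coalition $C^*$, and define the interpolating sets $O_i := (C^* \cup X_i)\cap Y_i$. Then $O_0 = C^*$ and $O_n = X_n = Y_n$. The plan is to show that each step loses, in expectation, at most half of what it gains in $f(X_i)+f(Y_i)$:
\begin{equation*}
\mathbb{E}\big[f(O_{i-1}) - f(O_i)\big] \;\le\; \tfrac12\,\mathbb{E}\big[f(X_i)-f(X_{i-1}) + f(Y_i)-f(Y_{i-1})\big].
\end{equation*}
Summing over $i=1,\dots,n$ telescopes to $f(C^*) - \mathbb{E}[f(O_n)] \le \tfrac12\,\mathbb{E}[f(X_n)+f(Y_n) - f(\emptyset) - f(N)]$. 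Since $O_n=X_n=Y_n$ and non-negativity gives $f(\emptyset)+f(N)\ge 0$, this yields $\mathbb{E}[f(X_n)] \ge \tfrac12 f(C^*)$. Non-negativity is used only at this last step; we would flag it there to connect with the paper's later signed correction.

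Before the per-step inequality, we need a preliminary fact: $a_i + b_i \ge 0$. Because $X_{i-1}\subseteq Y_{i-1}\setminus\{i\}$, submodularity gives $f(X_{i-1}\cup\{i\})-f(X_{i-1}) \ge f(Y_{i-1})-f(Y_{i-1}\setminus\{i\})$, which is exactly $a_i \ge -b_i$. Hence $a_i'$ and $b_i'$ cannot both be zero unless $a_i = b_i = 0$, and in that case the step is harmless. Conditioning on the history up to step $i-1$, we would split into three cases:
\begin{itemize}
\item $a_i \ge 0,\ b_i \le 0$: the algorithm adds $i$ deterministically, so the right-hand side is $a_i/2 \ge 0$.
\item $a_i \le 0,\ b_i \ge 0$: symmetric to the first case.
\item $a_i, b_i > 0$: the genuinely randomized case.
\end{itemize}
In the deterministic cases, the loss on the left is either $0$ (if $i\in C^*$, resp.\ $i\notin C^*$) or is bounded by submodularity. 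For example, if $i\notin C^*$ and we add $i$, then $f(O_{i-1}) - f(O_{i-1}\cup\{i\}) \le f(Y_{i-1}\setminus\{i\}) - f(Y_{i-1}) = b_i \le 0$, since $O_{i-1}\subseteq Y_{i-1}\setminus\{i\}$.

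The main obstacle is the randomized case. Its right-hand side equals $\frac{a_i^2+b_i^2}{2(a_i+b_i)}$. Suppose $i\notin C^*$. The loss is incurred only when $i$ is added, with probability $\frac{a_i}{a_i+b_i}$, and by the submodular comparison above it is at most $b_i$. So the expected loss is at most $\frac{a_ib_i}{a_i+b_i}$; the case $i\in C^*$ is symmetric with roles swapped. The key inequality is then $a_ib_i \le \tfrac12(a_i^2+b_i^2)$, i.e.\ AM--GM. Getting the set containments right, namely $X_{i-1}\subseteq O_{i-1}\setminus\{i\}$ and $O_{i-1}\subseteq Y_{i-1}\setminus\{i\}$ in the appropriate cases, is where care is needed.

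For the tightness (hardness) clause, we would not reprove it. We would invoke \citet{feige2011maximizing}, whose symmetry-gap construction shows that any algorithm achieving ratio $\tfrac12+\epsilon$ in the value-oracle model needs exponentially many queries, and simply note that Algorithm~\ref{alg:doublegreedy} uses $O(n)$ queries.
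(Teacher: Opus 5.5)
Your proposal is correct and reproduces the potential-function argument of \citet{buchbinder2015tight}: the interpolating sets $O_i=(C^*\cup X_i)\cap Y_i$, the lemma $a_i+b_i\ge 0$, the three-case per-step bound closed by $a_ib_i\le\tfrac12(a_i^2+b_i^2)$, and non-negativity used only to drop $f(\emptyset)+f(N)$, with the hardness clause deferred to \citet{feige2011maximizing}. The paper gives no proof of its own and simply cites these two sources, so your route coincides with the one it relies on.
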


The non-negativity requirement matters: our $U(\cdot\mid x)$ (Equation~\ref{eq:argmax})
can be negative, since a large, redundant coalition can have quality gains outweighed by
latency, risk, redundancy, and activation cost. The guarantee above does not apply
directly to a signed objective, but it extends to one by a simple shift argument, which
we state as an explicitly limited analytical device rather than a practical algorithm in
its own right (see the caveats following the corollary).

\begin{corollary}[Extension to signed $U$]
\label{cor:shift}
Let $m = \min_{C\subseteq N} U(C\mid x)$ and define the shifted objective
$\tilde U(C\mid x) := U(C\mid x) - m \geq 0$. Since adding a constant to a set function
preserves submodularity, $\tilde U$ is submodular and non-negative, so
Proposition~\ref{prop:doublegreedy} applies to $\tilde U$: running
Algorithm~\ref{alg:doublegreedy} on $\tilde U$ returns $C$ with
$\mathbb{E}[\tilde U(C\mid x)] \geq \tfrac12\, \tilde U(C^*\mid x)$, which translates
back to
\begin{equation}
  \mathbb{E}[U(C\mid x)] \;\geq\; \tfrac12\, U(C^*\mid x) + \tfrac12\, m.
  \label{eq:shift-bound}
\end{equation}
\end{corollary}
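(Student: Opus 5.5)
The plan is to reduce the signed case to Proposition~\ref{prop:doublegreedy} by a constant shift, and then translate the guarantee back to $U$ using linearity of expectation.

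\emph{Step 1: $\tilde U$ is well defined.} Since $N$ is finite, $2^N$ is finite and the minimum $m=\min_{C\subseteq N}U(C\mid x)$ is attained. Hence $\tilde U(C\mid x)=U(C\mid x)-m\ge 0$ for every $C$.

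\emph{Step 2: $\tilde U$ is submodular.} Subtracting a constant leaves every marginal difference unchanged, because
\[
\tilde U(C\cup\{i\}\mid x)-\tilde U(C\mid x)=U(C\cup\{i\}\mid x)-U(C\mid x).
\]
So the diminishing-returns inequalities defining submodularity carry over verbatim from $U$ to $\tilde U$. $U$ itself is submodular by the argument of Section~\ref{sec:submod-connection}.

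\emph{Step 3: double-greedy behaves identically on $U$ and $\tilde U$.} Algorithm~\ref{alg:doublegreedy} reads $U$ only through the differences $a_i$ and $b_i$, and these are invariant under the shift. So running it on $\tilde U$ or on $U$ produces the same distribution over outputs $C$. This is worth recording because it shows that the shift is an analytical device only: the algorithm never needs to know $m$.

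\emph{Step 4: invoke the guarantee and translate back.} The maximizers of $\tilde U$ and $U$ coincide, so $C^*$ is the same for both. Applying Proposition~\ref{prop:doublegreedy} to the non-negative submodular function $\tilde U$ gives
\[
\mathbb{E}[U(C\mid x)]-m\;\ge\;\tfrac12\big(U(C^*\mid x)-m\big).
\]
Rearranging yields
\[
\mathbb{E}[U(C\mid x)]\;\ge\;\tfrac12\,U(C^*\mid x)+\tfrac12\,m,
\]
which is \eqref{eq:shift-bound}.

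There is no real obstacle; the step needing the most care is the interpretation of the result. Since $m\le U(\emptyset\mid x)$, and $m$ may be very negative, the bound can be weak and may even be vacuous relative to the trivial choice $C=\emptyset$. I would note this as a caveat rather than as part of the proof.
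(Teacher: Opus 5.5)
Your proposal is correct and follows the paper's proof, which simply substitutes $\tilde U = U - m$ into the guarantee of Proposition~\ref{prop:doublegreedy} and rearranges. Your Step 3 is not needed for the corollary as stated, but it is a useful addition. It rests on the fact that double-greedy reads its objective only through the shift-invariant differences $a_i, b_i$. Consequently the bound also holds for Algorithm~\ref{alg:doublegreedy} run directly on $U$, so $m$ is needed only to evaluate the bound, never to run the algorithm.
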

\begin{proof}
Substitute $\tilde U(C\mid x) = U(C\mid x) - m$ into
$\mathbb{E}[\tilde U(C\mid x)] \geq \tfrac12 \tilde U(C^*\mid x)$ and rearrange:
$\mathbb{E}[U(C\mid x)] - m \geq \tfrac12(U(C^*\mid x) - m)$, i.e.\
$\mathbb{E}[U(C\mid x)] \geq \tfrac12 U(C^*\mid x) - \tfrac12 m + m = \tfrac12
U(C^*\mid x) + \tfrac12 m$.
\end{proof}

This shift is primarily an analytical device rather than a practical algorithm in its
own right, for two reasons that should temper how it is read: computing the exact
minimum $m = \min_{C\subseteq N} U(C\mid x)$ is itself a combinatorial search over
$2^n$ coalitions with no known efficient algorithm in general (estimating $m$ instead
would weaken the guarantee further); and when $m$ is substantially negative -- plausible
whenever redundancy or risk costs can be large relative to quality -- the resulting
additive term $\tfrac12 m$ can make Equation~\eqref{eq:shift-bound} numerically
uninformative (e.g.\ vacuous if $\tfrac12 U(C^*\mid x) + \tfrac12 m \leq 0$, which any
non-negative utility trivially satisfies). We report the shift because it is the
mathematically correct extension of Proposition~\ref{prop:doublegreedy} to a signed
objective, not because it yields a bound we expect to be tight or actionable in
practice.

Because $m \leq 0$ typically (the empty coalition, or a coalition dominated by
redundancy cost, achieves the minimum), Equation~\eqref{eq:shift-bound} is a strictly
weaker guarantee than the clean $\tfrac12$-approximation that holds for non-negative
functions; we report it in this explicit, weaker form rather than applying the
non-negative-case bound to a signed objective without adjustment.

Algorithm~\ref{alg:doublegreedy} is a strict theoretical improvement over
Algorithm~\ref{alg:greedy} in the worst case, since add-only greedy has no known
constant-factor guarantee once $U$ is allowed to be non-monotone and signed. In practice
we use Algorithm~\ref{alg:greedy} because it is simpler to implement as a stateful,
streaming router (it only ever grows the active coalition, which maps naturally onto
sequentially deciding whether to contact the next agent). Section~\ref{sec:simulation}
reports how close Algorithm~\ref{alg:greedy} comes to the brute-force optimum on our
synthetic instances \emph{as an empirical finding}; this should not be read as evidence
that Algorithm~\ref{alg:greedy} inherits either Proposition~\ref{prop:curvature} or
Proposition~\ref{prop:doublegreedy}, since it satisfies the preconditions of neither
(it is unconstrained rather than cardinality-constrained, and it only adds agents rather
than running the randomized add/remove procedure double-greedy requires). We view
Algorithm~\ref{alg:doublegreedy} as the theoretically principled fallback when the
marginal-value routing rule of Equation~\eqref{eq:rule} is suspected to be a poor
approximation -- e.g., when $\hat\alpha_Q$, estimated from logged data, is close to $1$.

\begin{remark}[Robustness to approximate submodularity]
\label{rem:robustness}
Real agent-quality functions $Q$ need not be exactly submodular; conflicting or
miscalibrated agents can occasionally violate diminishing returns locally. The
guarantees above degrade gracefully rather than catastrophically in this case: if $Q$
is only \emph{approximately} submodular with submodularity ratio $\gamma \in (0,1]$
(the largest $\gamma$ such that $Q$ satisfies a $\gamma$-relaxed diminishing-returns
condition), the cardinality-constrained greedy guarantee degrades to
$(1-e^{-\gamma})$ \citep{das2011submodular}, recovering
Proposition~\ref{prop:curvature} at $\gamma=1$. We do not estimate $\gamma$ in this
paper, but flag it as the right empirical quantity for a follow-up study to report
alongside $\hat\alpha_Q$.
\end{remark}

\section{Online Credit Assignment with Shapley Values}
\label{sec:shapley}

After a coalition completes a task, value must be attributed to individual agents.
We compute the Shapley value of the full net-utility game $U$ (not $V$), so that an
agent's credit already accounts for its own activation cost -- an agent that produces
a lot of quality but is expensive to run should not receive full credit for the
quality alone. The Shapley value gives agent $i$ credit
\begin{equation}
  \phi_i(U) = \sum_{C \subseteq N \setminus \{i\}} \frac{|C|!\,(n-|C|-1)!}{n!}
  \Big[U(C \cup \{i\}) - U(C)\Big],
  \label{eq:shapley}
\end{equation}
which is the unique allocation satisfying efficiency ($\sum_i \phi_i = U(N)$),
symmetry, the null-player property, and additivity \citep{shapley1953value}. Prior
work uses this decomposition \emph{post hoc}, to price contributions and redistribute
reward among self-interested LLM agents after a task completes
\citep{shapleycoop2025}. Our proposal is to move Shapley estimation earlier: use a
predicted marginal contribution $\hat\phi_{i,t}(U)$ -- an online estimate of
$\phi_i(U)$, and therefore \emph{already net of agent $i$'s activation cost}, since
$U$ is the cost-net game (Equation~\ref{eq:argmax}) -- estimated from features
available at routing time (task embedding, agent track record, partial coalition
state), to decide whether to contact agent $i$ at stage $t$:
\begin{equation}
  P(i \text{ contacted at time } t) = \sigma\!\big(\hat\phi_{i,t}(U)\big),
  \label{eq:online-shapley}
\end{equation}
where $\sigma$ is a monotone squashing function (e.g., logistic) centered so that
$\sigma(0) = \tfrac12$: since $\hat\phi_{i,t}(U)$ already nets out $c_i$, no further
cost term is subtracted here -- doing so would subtract agent $i$'s activation cost a
second time, once inside the estimate of $\phi_i(U)$ and once again explicitly, which
is exactly the accounting error Equation~\eqref{eq:equivalence} was introduced to
avoid elsewhere in the paper. Realized
Shapley values from completed tasks, computed exactly for small $|C|$ or approximated
via Monte Carlo permutation sampling for larger coalitions, then supervise the
predictor $\hat\phi_{i,t}(U)$, closing the loop between post-hoc credit assignment and
future routing probability. This also gives a natural signal for (i) adjusting
routing priors, (ii) allocating token/compute budgets across agents, (iii) pricing
contributions when agents belong to different providers, and (iv) identifying agents
that consistently free-ride on others' contributions.

\subsection{A theoretical bound on agent contribution: the Shapley--submodularity sandwich}
\label{sec:sandwich}

Equation~\eqref{eq:online-shapley} routes on an \emph{estimate} $\hat\phi_{i,t}(U)$ of
the true Shapley value $\phi_i(U)$, while Algorithm~\ref{alg:greedy} routes on the raw marginal contribution
$\Delta_i(C,x) = \Delta_i V(C,x)$ to whatever partial coalition $C$ has been built so
far (Equation~\eqref{eq:marginal}), compared against $c_i$. By the equivalence
established in Equation~\eqref{eq:equivalence}, Algorithm~\ref{alg:greedy}'s decision
is equivalent to asking whether the corresponding marginal contribution to the full
net utility, $\Delta_i U(C,x) := U(C\cup\{i\}\mid x) - U(C\mid x) = \Delta_i V(C,x) -
c_i$, is positive. It is this $U$-marginal, not the raw $V$-marginal, that we compare
against the Shapley value $\phi_i(U)$ below, since both live in the same game.

\begin{proposition}[Shapley--submodularity sandwich]
\label{prop:sandwich}
If $U(\cdot \mid x)$ is submodular (diminishing marginal returns: for all $A
\subseteq B \subseteq N\setminus\{i\}$, $U(A\cup\{i\}) - U(A) \geq U(B\cup\{i\}) -
U(B)$ -- which holds whenever $V$ is submodular, since $U$ differs from $V$ only by a
modular cost term, Section~\ref{sec:submod-connection}), then for every agent $i$ and
every coalition $C$ with $i \notin C$,
\begin{equation}
  U(N\mid x) - U(N\setminus\{i\}\mid x) \;\leq\; \Delta_i U(C, x) \;\leq\; U(\{i\}\mid x) - U(\emptyset \mid x),
  \label{eq:sandwich}
\end{equation}
and in particular the Shapley value itself satisfies the same sandwich,
\begin{equation}
  U(N\mid x) - U(N\setminus\{i\}\mid x) \;\leq\; \phi_i(U) \;\leq\; U(\{i\}\mid x) - U(\emptyset\mid x).
  \label{eq:sandwich-shapley}
\end{equation}
\end{proposition}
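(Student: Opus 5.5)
The plan is to derive both inequalities in \eqref{eq:sandwich} directly from the diminishing-returns form of submodularity stated in the hypothesis. Then \eqref{eq:sandwich-shapley} follows because the Shapley value is a convex combination of such marginals.

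\textbf{Step 1 (upper bound in \eqref{eq:sandwich}).} Fix $i$ and any $C \subseteq N\setminus\{i\}$. Apply the diminishing-returns inequality with $A = \emptyset$ and $B = C$; this is legitimate since $\emptyset \subseteq C \subseteq N\setminus\{i\}$. It gives $U(\{i\}) - U(\emptyset) \geq U(C\cup\{i\}) - U(C) = \Delta_i U(C,x)$.

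\textbf{Step 2 (lower bound in \eqref{eq:sandwich}).} Apply the same inequality with $A = C$ and $B = N\setminus\{i\}$, using $C \subseteq N\setminus\{i\}$. Since $(N\setminus\{i\})\cup\{i\} = N$, this yields $\Delta_i U(C,x) \geq U(N) - U(N\setminus\{i\})$. Nothing beyond the stated hypothesis is needed, and no monotonicity or sign condition on $U$ is used. This matters because $U$ is signed and non-monotone once costs are subtracted. It is also worth remarking that the hypothesis on $U$ follows from submodularity of $V$: the marginal of $U$ is the marginal of $V$ minus the constant $c_i$, by the computation behind \eqref{eq:equivalence}, so $U$ inherits the inequality.

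\textbf{Step 3 (Shapley sandwich \eqref{eq:sandwich-shapley}).} Write $\phi_i(U)$ via \eqref{eq:shapley} as $\sum_{C\subseteq N\setminus\{i\}} w_C\,\Delta_i U(C,x)$ with $w_C = |C|!\,(n-|C|-1)!/n!$. The only point requiring care is checking that these weights are nonnegative and sum to one. I would verify this by grouping by $s = |C|$: there are $\binom{n-1}{s}$ such sets, each with weight $s!(n-s-1)!/n!$, so each size class contributes $1/n$, and the $n$ classes $s=0,\dots,n-1$ sum to $1$. Equivalently, $\phi_i$ is the expected marginal contribution of $i$ over a uniformly random ordering of $N$.

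Given this, $\phi_i(U)$ is a convex combination of quantities that each lie in the interval from Steps 1--2. Since that interval does not depend on $C$, it contains the combination as well. I expect no real obstacle. The only subtleties are the weight normalization and stating clearly that Steps 1--2 hold uniformly over all $C \not\ni i$, including $C=\emptyset$ and $C = N\setminus\{i\}$, where the bounds are attained.
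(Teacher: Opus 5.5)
Your proposal is correct and follows essentially the paper's argument: both apply diminishing returns at the two extreme coalitions $\emptyset$ and $N\setminus\{i\}$ and then average over the Shapley weights. The only cosmetic difference is that you average via the subset-sum weights and check that they sum to one, while the paper uses the equivalent random-ordering expectation; your direct treatment of an arbitrary $C \subseteq N\setminus\{i\}$ is, if anything, slightly cleaner than the paper's ``take $C = P_i^\pi$'' phrasing.
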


\begin{proof}
The Shapley value can be written as an expectation over uniformly random orderings
$\pi$ of $N$: $\phi_i(U) = \mathbb{E}_\pi\big[U(P_i^\pi \cup \{i\}) - U(P_i^\pi)\big]$,
where $P_i^\pi$ is the set of agents preceding $i$ in $\pi$. For every ordering,
$\emptyset \subseteq P_i^\pi \subseteq N\setminus\{i\}$, so by the diminishing-returns
definition of submodularity applied at $A = P_i^\pi$ and the two extremes $A=\emptyset$
and $A = N\setminus\{i\}$,
\[
  U(N\mid x) - U(N\setminus\{i\}\mid x) \;\leq\; U(P_i^\pi \cup \{i\}\mid x) - U(P_i^\pi\mid x) \;\leq\; U(\{i\}\mid x) - U(\emptyset\mid x)
\]
holds for \emph{every} $\pi$, hence for $\Delta_i U(C,x)$ directly (taking $C = P_i^\pi$
for any single ordering), and survives taking the expectation over $\pi$ that defines
$\phi_i(U)$, giving Equation~\eqref{eq:sandwich-shapley}.
\end{proof}

\begin{corollary}[Bounded routing gap]
\label{cor:gap}
Define agent $i$'s \emph{diminishing-returns gap} as
\begin{equation}
  \Gamma_i(x) \;=\; \big[U(\{i\}\mid x) - U(\emptyset \mid x)\big] \;-\; \big[U(N\mid x) - U(N\setminus\{i\}\mid x)\big] \;\geq 0.
  \label{eq:gap}
\end{equation}
Then for any coalition $C$ built by Algorithm~\ref{alg:greedy} at the point agent $i$
is considered, $\big|\Delta_i U(C,x) - \phi_i(U)\big| \leq \Gamma_i(x)$. By the
equivalence in Equation~\eqref{eq:equivalence}, $\Delta_i U(C,x) = \Delta_i(C,x) -
c_i$ where $\Delta_i(C,x)$ is exactly the quantity Algorithm~\ref{alg:greedy} already
computes, so this bounds a simple, known offset ($\Delta_i(C,x) - c_i$) of Algorithm~\ref{alg:greedy}'s
own decision signal against the fair Shapley credit, in terms of agent $i$'s own
diminishing-returns gap, independent of when in the coalition-building process $i$
is evaluated.
\end{corollary}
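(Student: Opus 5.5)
The plan is to read Corollary~\ref{cor:gap} directly off Proposition~\ref{prop:sandwich}: two quantities that lie in the same closed interval cannot differ by more than the interval's length. Write $\ell_i := U(N\mid x) - U(N\setminus\{i\}\mid x)$ and $h_i := U(\{i\}\mid x) - U(\emptyset\mid x)$, so that $\Gamma_i(x) = h_i - \ell_i$ by Equation~\eqref{eq:gap}.

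\textbf{Step 1 (non-negativity of $\Gamma_i(x)$).} Apply the diminishing-returns inequality with $A=\emptyset \subseteq B = N\setminus\{i\}$. This gives $h_i \geq \ell_i$, and hence $\Gamma_i(x)\geq 0$, which justifies the ``$\geq 0$'' in Equation~\eqref{eq:gap}.

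\textbf{Step 2 (placing both quantities in $[\ell_i,h_i]$).} Let $C$ be any coalition with $i\notin C$. Equation~\eqref{eq:sandwich} gives $\ell_i \leq \Delta_i U(C,x) \leq h_i$, and Equation~\eqref{eq:sandwich-shapley} gives $\ell_i \leq \phi_i(U) \leq h_i$. For real numbers $a,b \in [\ell_i,h_i]$ we have $a-b \leq h_i-\ell_i$ and $b-a\leq h_i-\ell_i$. Therefore $|\Delta_i U(C,x)-\phi_i(U)|\leq \Gamma_i(x)$.

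\textbf{Step 3 (scope and the equivalence).} The coalition $C$ that Algorithm~\ref{alg:greedy} has built when it considers agent $i$ always satisfies $i\notin C$, because $i$ is drawn from $N\setminus C$. The bound therefore holds at every point in the run, which gives the claimed ``independent of when $i$ is evaluated'' clause. The identity $\Delta_i U(C,x)=\Delta_i(C,x)-c_i$ is exactly the computation behind Equation~\eqref{eq:equivalence}, so nothing new is needed for that part.

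\textbf{Main obstacle.} The only point needing care is consistency of the game. The sandwich has to be applied to $U$, not $V$, and submodularity of $U$ has to be inherited from $V$ as in Section~\ref{sec:submod-connection}. Once that is fixed, both endpoints $\ell_i,h_i$ refer to the same game as $\phi_i(U)$ and $\Delta_iU$, and the argument is a one-line interval comparison. No sharper, ordering-dependent bound is attempted, since the corollary claims only the crude interval width.
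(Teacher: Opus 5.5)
Your proposal is correct and is essentially the paper's own argument. The paper gives no separate proof of this corollary: it reads it off Proposition~\ref{prop:sandwich} by noting that $\Delta_i U(C,x)$ and $\phi_i(U)$ both lie in the interval $[U(N\mid x)-U(N\setminus\{i\}\mid x),\,U(\{i\}\mid x)-U(\emptyset\mid x)]$, whose width is $\Gamma_i(x)$, exactly as you do; your explicit checks that $i\notin C$ and that $\Gamma_i(x)\geq 0$ just fill in details the paper leaves implicit.
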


Corollary~\ref{cor:gap} establishes that a specific marginal-contribution quantity is
close to Shapley-fair credit; it does \emph{not}, by itself, establish that
Algorithm~\ref{alg:greedy} selects the correct coalition, incurs small regret, or
routes optimally -- an agent can have a marginal value close to its Shapley value
while the router still makes the wrong decision, for instance because of noise in the
value estimate or because the agent sits near the activation threshold $c_i$ (see the
noise sensitivity sweep in Section~\ref{sec:sensitivity}, where exact-match rate with
the optimum degrades far faster than utility does under estimation noise, for exactly
this reason). With that caveat, Corollary~\ref{cor:gap} still gives a precise,
testable diagnostic for when the cheap greedy activation rule
(Equation~\ref{eq:rule}) is a safe proxy for Shapley-fair routing
(Equation~\ref{eq:online-shapley}), and when the more expensive online Shapley
estimator is worth its cost: agents with small $\Gamma_i(x)$ (little overlap with
other agents' skills, so their contribution barely depends on who else is present)
have a raw marginal contribution that is a reliable proxy for their Shapley credit,
making Algorithm~\ref{alg:greedy} more defensible \emph{when its value estimates are
also accurate}; agents with large $\Gamma_i(x)$ (highly redundant with other agents,
so their marginal contribution swings sharply depending on coalition composition --
e.g., a second verifier agent whose value depends heavily on whether a first verifier
is already active) need the full online Shapley machinery of
Section~\ref{sec:shapley} to be routed fairly. Summed across agents, $\sum_{i \in N}
\Gamma_i(x)$ is a single diagnostic a system operator can log per task family: a
small sum indicates the agent pool is close to skill-orthogonal (low curvature, in
the sense of Equation~\ref{eq:curvature}) and credit-approximation error from using
raw marginal value is small; a large sum indicates substantial skill overlap and
motivates paying for Shapley-based routing or the double-greedy fallback of
Algorithm~\ref{alg:doublegreedy}. Together with the curvature bound of
Proposition~\ref{prop:curvature} (Section~\ref{sec:approx-guarantees}), this connects
the paper's two main theoretical tools -- submodular optimization and the Shapley
value -- into a single diagnostic quantity ($\alpha_Q$, or its per-agent analogue
$\Gamma_i$) that bounds \emph{credit-assignment} accuracy; it is supportive evidence
for, but not a proof of, good coalition-selection quality, which is instead
addressed (separately, and only in restricted regimes) by
Propositions~\ref{prop:curvature} and~\ref{prop:doublegreedy}.

\section{Coalition Stability}
\label{sec:stability}

Maximizing $U(C \mid x)$ for a single task does not guarantee that a multi-task
\emph{system} of coalitions is stable: individual agents, or providers operating
agents, may prefer to leave one coalition for another. We adopt three standard
stability notions from hedonic game theory
\citep{dreze1980hedonic,bogomolnaia2002hedonic}.

\begin{definition}[Core stability]
An allocation $y_1,\dots,y_n$ with $\sum_{i \in N} y_i = U(N)$ is in the \emph{core}
if $\sum_{i \in C} y_i \geq U(C)$ for every $C \subseteq N$: no subgroup can produce
more value by leaving the grand coalition and splitting off on its own
\citep{gillies1959solutions}.
\end{definition}

\begin{definition}[Nash-stable partition]
A partition $\pi$ of $N$ into coalitions is \emph{Nash-stable} if no agent strictly
prefers moving from its current coalition to another coalition in $\pi$ (including
the empty coalition), under each agent's individual utility function.
\end{definition}

\begin{definition}[Pairwise stability]
A communication graph $G$ is \emph{pairwise stable} if no pair of agents benefits,
jointly, from adding or removing the edge between them.
\end{definition}

\begin{remark}[Why the classical convex-game result does not transfer]
\label{rem:core-mismatch}
The classical result that the Shapley value lies in the core applies to \emph{convex}
games -- games whose characteristic function is \emph{supermodular}, i.e.\ has
\emph{increasing} marginal returns: $U(A\cup\{i\}) - U(A) \leq U(B\cup\{i\}) -
U(B)$ for $A \subseteq B$ \citep{shapley1971cores}. Our framework's central modeling
assumption (Section~\ref{sec:marginal}) is the opposite: $U$ is \emph{submodular},
with \emph{diminishing} marginal returns, precisely because redundant agents
contribute less as the coalition grows. Submodularity and supermodularity are, in
this sense, opposing conditions, and a game that is (non-trivially) both is
essentially modular. Consequently, the classical Shapley-in-the-core theorem does
\emph{not} apply to the coalition-formation game studied in this paper, and we do
not claim it does; a restricted superadditivity condition on skill-complementary
sub-coalitions is not sufficient to establish the supermodularity the theorem
actually requires. The only case in which core stability is immediate is the
degenerate one in which $U$ is exactly modular -- i.e., agents' skills are fully
non-overlapping, $D \equiv 0$, and activation costs are the only cost term, so
$U(C) = \sum_{i\in C} U(\{i\})$ for every $C$ -- in which case the
unique core allocation $y_i = U(\{i\})$ coincides trivially with the Shapley value.
Beyond this degenerate case, whether (and under what conditions) our submodular
coalition-formation game admits a non-empty core, and how that relates to the
Nash-stable partitions studied for LLM agent coalitions under hedonic game theory
\citep{coalitionstability2026}, is an open question that we leave to future work
rather than a result we claim here.
\end{remark}

\section{Local Utility, System Welfare, and Incentive Alignment}
\label{sec:mechanism}

Agents (or the providers operating them) may have private utilities that diverge from
system-level welfare. A representative local utility is
\begin{equation}
  u_i = \alpha_i \cdot \text{credit}_i - \beta_i \cdot \text{computation}_i -
  \gamma_i \cdot \text{failure\_exposure}_i,
  \label{eq:local-utility}
\end{equation}
while the platform's objective is
\begin{equation}
  W = \text{task quality} - \lambda\cdot\text{cost} - \mu\cdot\text{latency} - \rho\cdot\text{risk}.
  \label{eq:welfare}
\end{equation}
Left unmanaged, a self-interested agent may avoid difficult tasks, exaggerate
confidence to be selected, send unnecessary messages to appear useful, over-delegate,
withhold useful information, or form coalitions that benefit its members at the
system's expense. In an idealized setting with quasilinear utilities, exact welfare
maximization, and correctly specified Clarke pivot payments, a
Vickrey--Clarke--Groves mechanism -- paying agent $i$ its marginal contribution to
welfare net of the externality it imposes on the rest of the coalition
\citep{vickrey1961counterspeculation,clarke1971multipart,groves1973incentives} -- would
make truthful valuation reporting a dominant strategy, aligning $u_i$ with $W$ at the
margin. Whether an analogous result can be obtained using \emph{noisy, estimated}
Shapley-based contribution values $\hat\phi_i$ in a real LLM-agent setting -- where
welfare is not exactly maximized, valuations are not quasilinear, and payments would
be based on an approximate credit-assignment signal rather than exact marginal
welfare -- remains open. We do not claim the idealized guarantee transfers; VCG
mechanisms are also generally not budget-balanced, which is a separate obstacle to
direct implementation. We view the idealized mechanism only as the right
\emph{target} to approximate, and a natural direction for follow-up work
combining Section~\ref{sec:shapley}'s online Shapley estimator with an explicit
payment or reward-shaping rule.

\section{Extension: Dynamic Coalition Formation (Future Work)}
\label{sec:dynamic}

We flag, but do not develop or validate, one further extension. Coalitions plausibly
should change during task execution rather than being fixed at the outset. Let the
state at time $t$ be $s_t = (\text{task progress}, \text{agent
outputs}, \text{confidence}, \text{remaining budget}, \text{disagreement})$, and let
the action be $a_t = (\text{active coalition}, \text{communication links},
\text{message content}, \text{next agent})$. A dynamic objective is
\begin{equation}
  \max_{\pi} \; \mathbb{E}_\pi\!\left[\sum_{t=0}^{T} \gamma^t (r_t - c_t)\right],
  \label{eq:dynamic}
\end{equation}
learned with multi-agent reinforcement learning, where the static cooperative game of
Sections~\ref{sec:formulation}--\ref{sec:mechanism} supplies the per-step reward
shaping and credit assignment ($r_t$ derived from $U(\cdot\mid x)$, already net of
per-agent activation cost, and $c_t$ from the communication costs). We flag this extension as substantially harder
than static coalition selection -- it compounds the credit-assignment problem across
a sequence of coalition changes -- and we treat it as a natural but separate line of
future work (Section~\ref{sec:limitations}) rather than a claim validated in this
paper.

\section{Illustrative Simulation}
\label{sec:simulation}

To sanity-check the framework before committing to a full empirical study with real
LLM agents, we instantiate Equation~\eqref{eq:v} in a controlled synthetic setting.

\paragraph{Setup.} We generate $N = 8$ candidate agents, each with a competence
vector over $K = 6$ latent skill dimensions in $[0,1]$. Each agent has one primary
skill drawn with competence in $[0.85, 1.0]$, and, with probability $0.5$, a
secondary skill with competence in $[0.5, 0.8]$; competence on all other dimensions is
low ($[0.05, 0.35]$), reflecting the planner/coder/search/critic/security archetype of
Section~\ref{sec:formulation}. We generate $500$ synthetic tasks, each with a random
skill-demand vector drawn from a Dirichlet distribution over the $K$ dimensions.
Coalition quality is a saturating function of how well the coalition's best-per-skill
competence covers the task's demand,
$Q(C, x) = 1 - \exp(-2.2 \cdot \text{coverage}(C,x))$, where
$\text{coverage}(C, x) = \sum_k \max_{i \in C} \text{skill}_i(k)\cdot\text{demand}(k)
/ \sum_k \text{demand}(k)$. Consistent with Equation~\eqref{eq:v}, $V(C,x) = Q(C,x) -
\lambda_4 D(C,x)$ excludes token/compute cost entirely: $\lambda_2 L$ and $\lambda_3 R$
are set to zero in this simplified instantiation, and the redundancy term grows
quadratically in coalition size, normalized by the number of agent pairs
($\lambda_4 D(C,x) = 0.16 \cdot \binom{|C|}{2} / \binom{N}{2}$). The \emph{only} place
token/compute cost enters is the per-agent activation price $c_i = 0.05$ (uniform
across agents), applied exactly once via $U(C\mid x) = V(C\mid x) - \sum_{i\in C} c_i$
(Equation~\ref{eq:argmax}); no other cost term duplicates it.

\paragraph{Policies compared.} (i) \emph{Grand coalition}: activate all $N$ agents.
(ii) \emph{Random coalition}: activate a uniformly random subset. (iii)
\emph{Greedy marginal-value router} (Algorithm~\ref{alg:greedy}), our proposal, with
activation cost $c_i = 0.05$ applied uniformly (Section~\ref{sec:formulation}). (iv)
\emph{Brute-force optimum}: exhaustive search over
all $2^8 = 256$ coalitions (tractable at this scale, and used only as an upper-bound
reference -- not a proposal for deployment, since it does not scale to realistic
agent-pool sizes). All four policies are scored on the same objective, the full net
utility $U(C\mid x) = V(C\mid x) - \sum_{i\in C} c_i$ from Equation~\eqref{eq:argmax},
so that comparisons across policies are on equal footing.

\begin{table}[t]
\centering
\caption{Mean utility and coalition size across $500$ synthetic tasks. Greedy
marginal-value routing recovers $99.5\%$ of brute-force-optimal utility while
activating $78\%$ fewer agents, on average, than the grand coalition.}
\label{tab:results}
\begin{tabular}{lccc}
\toprule
Policy & Mean utility $U(C\mid x)$ & Mean $|C|$ & \% of optimal utility \\
\midrule
Grand coalition (broadcast all) & $0.262$ & $8.00$ & $38.8\%$ \\
Random coalition & $0.453$ & $4.46$ & $67.1\%$ \\
\textbf{Greedy marginal-value (proposed)} & $\mathbf{0.671}$ & $\mathbf{1.96}$ & $\mathbf{99.5\%}$ \\
Brute-force optimum (upper bound) & $0.675$ & $1.97$ & $100.0\%$ \\
\bottomrule
\end{tabular}
\end{table}

\begin{figure}[t]
\centering
\begin{tikzpicture}
\begin{axis}[
  ybar, bar width=14pt, width=0.46\textwidth, height=4.6cm,
  symbolic x coords={Grand,Random,Greedy,Optimal},
  xtick=data, x tick label style={font=\scriptsize, rotate=20, anchor=east},
  ylabel={Mean utility $U(C\mid x)$}, ylabel style={font=\scriptsize},
  ymin=0, ymax=0.8, title={(a) Utility by coalition policy}, title style={font=\small},
  nodes near coords, nodes near coords style={font=\tiny},
  enlarge x limits=0.2, tick label style={font=\scriptsize}
]
\addplot[fill=gray!60] coordinates {(Grand,0.262) (Random,0.453) (Greedy,0.671) (Optimal,0.675)};
\end{axis}
\end{tikzpicture}
\hfill
\begin{tikzpicture}
\begin{axis}[
  ybar, bar width=14pt, width=0.46\textwidth, height=4.6cm,
  symbolic x coords={Grand,Random,Greedy,Optimal},
  xtick=data, x tick label style={font=\scriptsize, rotate=20, anchor=east},
  ylabel={Mean coalition size $|C|$}, ylabel style={font=\scriptsize},
  ymin=0, ymax=9, title={(b) Agents activated per task}, title style={font=\small},
  nodes near coords, nodes near coords style={font=\tiny},
  enlarge x limits=0.2, tick label style={font=\scriptsize}
]
\addplot[fill=blue!40] coordinates {(Grand,8.00) (Random,4.46) (Greedy,1.96) (Optimal,1.97)};
\end{axis}
\end{tikzpicture}
\caption{(a) Mean task utility and (b) mean number of agents activated, by coalition
policy, over $500$ synthetic tasks. The greedy marginal-value router
(Algorithm~\ref{alg:greedy}) nearly matches the brute-force optimum in utility while
using far fewer agents than full broadcast.}
\label{fig:results}
\end{figure}

\paragraph{Results.} Table~\ref{tab:results} and Figure~\ref{fig:results} summarize
the outcome. The grand coalition performs worst despite activating every agent: at
this cost setting, redundant agents actively subtract value through the compute and
redundancy penalties, so full broadcast captures only $38.8\%$ of the utility a
well-chosen coalition achieves. The greedy marginal-value router
(Algorithm~\ref{alg:greedy}) selects the exact brute-force-optimal coalition on
$74.0\%$ of tasks, and achieves $99.5\%$ of optimal utility on average, while
activating only $1.96$ agents per task versus $1.97$ for the true optimum and $8.00$
for full broadcast. This is consistent with the submodular-optimization connection
in Section~\ref{sec:marginal}: greedy addition performs far better in this instance
than the $1/2$ worst-case bound for unconstrained submodular maximization
\citep{buchbinder2015tight}, though we note this is an empirical observation on one
synthetic family of instances, not a general guarantee.

\paragraph{Shapley accuracy check.} For a representative task, we compute the
\emph{exact} Shapley value for all $8$ agents (feasible via the subset-sum formula at
this scale: $2^8=256$ coalitions) and compare it against a Monte Carlo estimate using
$3{,}000$ random-permutation samples per agent. The estimated and exact values agree
to within $0.005$ for every agent (mean absolute error $0.0028$, max $0.0046$), and a
repeated-sampling experiment ($30$ independent runs of $500$ permutations each) gives
a Monte Carlo standard error of about $0.007$ per agent, confirming the estimator is
accurate at this sample size. Separately, $\sum_i \hat\phi_i = 0.2239$ matches
$U(N) = 0.2239$ to four decimal places; this is the efficiency \emph{identity}
($\sum_i \phi_i = U(N)$ holds exactly for the true Shapley value by definition, and
holds up to floating-point precision for a Monte Carlo estimate that shares sampled
permutations across agents, because per-permutation marginal contributions telescope
to $U(N) - U(\emptyset)$) rather than independent evidence of estimator accuracy --
we report it only as a sanity check that the implementation is internally consistent,
not as a validation of Monte Carlo accuracy, which is established separately by the
exact-value comparison above. Agents whose primary skill matched a high-demand
dimension for that task received Shapley values roughly an order of magnitude larger
than agents whose skills were largely redundant with a stronger teammate -- the
qualitative behavior the credit-assignment mechanism of Section~\ref{sec:shapley} is
meant to produce.

\paragraph{Sandwich-bound check.} For the same task, we compute each agent's
diminishing-returns gap $\Gamma_i(x)$ (Equation~\ref{eq:gap}) and compare it against
the realized gap $|\Delta_i U(C,x) - \phi_i(U)|$ between the marginal contribution to
the full net utility (evaluated at the point Algorithm~\ref{alg:greedy} considers
agent $i$, using the exact Shapley value) and its fair credit. Corollary~\ref{cor:gap}
predicts $|\Delta_i U(C,x) - \phi_i(U)| \leq \Gamma_i(x)$ for every agent; this held
for all $8$ agents in the example ($0/8$ violations), with realized gaps ranging from
$0.014$ to $0.496$ against bounds ranging from $0.463$ to $0.653$ -- consistent with
the bound being correct but, as expected for a worst-case guarantee, not tight for
every agent.

\subsection{Sensitivity analysis}
\label{sec:sensitivity}

The result above is measured on one specific data-generating process --
specialized, low-overlap agents with a saturating quality function, linear
activation cost, and a quadratic redundancy penalty -- which was chosen because it
matches the running planner/coder/critic/security example, but which also
structurally favors small, non-redundant coalitions and is therefore favorable to our
method almost by construction. To assess how much of the $99.5\%$ result is a
property of Algorithm~\ref{alg:greedy} versus a property of this particular instance, we sweep
five parameters, holding $500 \to 200$ tasks per setting (reduced for the sweep to
keep total runtime reasonable) and reporting greedy utility as a percentage of the
brute-force optimum under each setting.

\begin{table}[t]
\centering
\caption{Sensitivity of greedy performance (as \% of brute-force-optimal utility) to
five parameters, each varied while holding the others at their Section~\ref{sec:simulation}
default. The submodularity-violation and estimation-noise sweeps are the most
informative: greedy degrades substantially once its core assumptions
(submodularity of $V$; accurate value estimates) are violated, while it is robust to
the other three axes.}
\label{tab:sensitivity}
\small
\begin{tabular}{@{}p{0.42\textwidth}p{0.5\textwidth}@{}}
\toprule
Sweep & Greedy utility as \% of brute-force optimum \\
\midrule
Skill overlap ($0\!\to\!1$, specialized $\to$ homogeneous)
  & $99.5,\ 99.9,\ 100.0,\ 100.0,\ 100.0$ (monotonically improves) \\[2pt]
Submodularity violation $\beta$ ($0\!\to\!0.8$, synergy bonus)
  & $99.6,\ 94.9,\ 80.2,\ 66.1,\ 68.4,\ 70.5$ (\emph{substantial degradation}) \\[2pt]
Value-estimation noise $\sigma$ ($0\!\to\!0.4$)
  & $99.5,\ 97.7,\ 94.9,\ 89.9,\ 86.2,\ 79.2$ (\emph{steady degradation}) \\[2pt]
Activation cost $c_i$ ($0\!\to\!0.4$)
  & $99.8,\ 99.5,\ 99.5,\ 99.9,\ 100.0,\ 100.0$ (robust) \\[2pt]
Redundancy weight $\lambda_4$ ($0\!\to\!0.8$)
  & $99.6,\ 99.7,\ 99.5,\ 99.7,\ 99.8,\ 99.7$ (robust) \\
\bottomrule
\end{tabular}
\end{table}

\begin{figure}[t]
\centering
\begin{tikzpicture}
\begin{axis}[
  width=0.48\textwidth, height=5cm,
  xlabel={Submodularity-violation strength $\beta$}, ylabel={Greedy \% of optimum},
  xlabel style={font=\scriptsize}, ylabel style={font=\scriptsize},
  tick label style={font=\scriptsize}, ymin=60, ymax=102,
  title={(a) Effect of violating submodularity}, title style={font=\small},
]
\addplot[mark=*, thick, color=red!70!black] coordinates {(0,99.6) (0.1,94.9) (0.2,80.2) (0.35,66.1) (0.5,68.4) (0.8,70.5)};
\end{axis}
\end{tikzpicture}
\hfill
\begin{tikzpicture}
\begin{axis}[
  width=0.48\textwidth, height=5cm,
  xlabel={Value-estimation noise $\sigma$}, ylabel={Greedy \% of optimum},
  xlabel style={font=\scriptsize}, ylabel style={font=\scriptsize},
  tick label style={font=\scriptsize}, ymin=60, ymax=102,
  title={(b) Effect of noisy value estimates}, title style={font=\small},
]
\addplot[mark=*, thick, color=orange!80!black] coordinates {(0,99.5) (0.02,97.7) (0.05,94.9) (0.1,89.9) (0.2,86.2) (0.4,79.2)};
\end{axis}
\end{tikzpicture}
\caption{The two axes along which greedy marginal-value routing degrades
substantially: (a) introducing pairwise synergy bonuses that violate the
diminishing-returns assumption underlying Algorithm~\ref{alg:greedy}, and (b)
adding noise to the value estimates the router uses to make decisions (while
scoring the resulting coalition on the true, noise-free value). Both are exactly
the failure modes Corollary~\ref{cor:gap} and Remark~\ref{rem:robustness}
anticipate: greedy is only as good as the submodularity of $V$ and the accuracy
of its value estimates.}
\label{fig:sensitivity}
\end{figure}

Three parameters -- skill overlap, activation cost, and redundancy weight -- leave
greedy's performance essentially unchanged (all $\geq 99.5\%$ of optimal across the
full sweep range), indicating the headline result is not narrowly tuned to one
choice of these values. The other two, shown in Figure~\ref{fig:sensitivity}, are
where greedy genuinely breaks down: (a) as we inject a pairwise synergy bonus that
rewards specific agent pairs for being \emph{jointly} present (violating the
diminishing-returns assumption that makes $V$ submodular), greedy's performance falls
from $99.6\%$ to as low as $66$--$70\%$ of optimal, because greedy's single-agent
marginal-value criterion cannot detect that two specific agents are worth more
together than the sum of their individual marginal contributions; (b) as we add noise
to the value estimate the router uses to make its decision (while scoring the
resulting coalition on the true, noise-free value), performance degrades smoothly from
$99.5\%$ at zero noise to $79.2\%$ at a noise standard deviation of $0.4$ (comparable
in scale to the marginal contributions themselves), showing the router's exact-match
rate with the optimum collapses much faster ($87.5\% \to 3.5\%$) than its utility
does, because near-ties in marginal value become easy to flip under noise even when
the resulting utility loss is small. We view these two sweeps -- not the single-point
$99.5\%$ headline number -- as the more informative characterization of the method:
Algorithm~\ref{alg:greedy} is a strong choice when $V$ is close to submodular and
reasonably well estimated, and a real deployment should monitor both properties (e.g.\
via the curvature diagnostic $\hat\alpha_Q$ of Section~\ref{sec:approx-guarantees} and
the per-agent gaps $\Gamma_i$ of Section~\ref{sec:sandwich}) rather than assume they
hold.

\subsection{A concrete empirical validation protocol}
\label{sec:validation}

The simulation above establishes that the optimization problem is well-posed and that
greedy routing is competitive with the optimum \emph{given} a characteristic
function of this form. It does not establish that Equation~\eqref{eq:v} is a good
model of real LLM agent quality, nor that the framework improves real pipelines. We
propose the following protocol for a follow-up empirical study:
\begin{enumerate}
  \item \textbf{Tasks and agent pool.} Use an existing multi-agent benchmark with a
  heterogeneous agent pool, e.g., software engineering tasks (SWE-bench-style, using
  planner/coder/critic/security agents) and multi-hop question answering (using
  planner/retriever/critic agents), so that skill complementarity is task-dependent.
  \item \textbf{Estimating $Q(C,x)$.} Approximate $Q$ with a learned or LLM-judge
  quality estimator, calibrated against ground-truth task success where available
  (e.g., unit-test pass rate for coding tasks), following the estimator-design
  choices used in comparable LLM-routing work \citep{ong2024routellm,yue2025masrouter}.
  \item \textbf{Baselines.} Compare the greedy marginal-value router against (a) a
  fixed dense-communication baseline, (b) a fixed hand-designed pipeline, (c) an
  existing learned multi-agent router such as MasRouter \citep{yue2025masrouter} or
  AgentPrune-style edge pruning \citep{zhang2025agentprune}.
  \item \textbf{Metrics.} Task success rate, total token cost, wall-clock latency, and
  -- specific to this framework -- the correlation between predicted Shapley
  estimates $\hat\phi_{i,t}(U)$ (Equation~\ref{eq:online-shapley}) and realized,
  post-hoc Shapley values, to test whether the online credit predictor is well
  calibrated.
\end{enumerate}
We view this protocol, rather than the synthetic simulation, as the appropriate
standard for claiming a real-world improvement, and we leave its execution to future
work.

\section{Limitations}
\label{sec:limitations}

Several limitations bound the current contribution. First, estimating $U(C\mid x)$
for every candidate coalition is combinatorially expensive in principle
($2^n$ coalitions); our greedy procedure and the submodularity connection
(Section~\ref{sec:marginal}) are proposed mitigations, but a full treatment requires
either a learned value-function approximator or a principled restriction to a small
lattice of plausible coalitions, neither of which we implement here. Second, every
guarantee in Sections~\ref{sec:approx-guarantees} and~\ref{sec:sandwich} --
the curvature bound, the double-greedy bound, and the Shapley--submodularity
sandwich -- assumes $U(\cdot\mid x)$ (or, for the curvature bound, $Q$) is exactly
submodular; real agent-quality functions may violate diminishing returns locally
(e.g., two verifier agents can be strictly complementary rather than redundant on
some tasks), and while Remark~\ref{rem:robustness} sketches how the guarantees
degrade gracefully under approximate submodularity via the submodularity ratio
\citep{das2011submodular}, we do not estimate this ratio empirically here (though the
sensitivity sweep in Section~\ref{sec:sensitivity} does show, empirically, how
performance degrades as submodularity is violated by an increasing synergy bonus).
Third, the Shapley value's classical fairness axioms additionally assume well-behaved
(e.g., superadditive) value functions for the core-stability results specifically;
once redundancy and conflict penalties are included, $U$ need not be superadditive,
and Section~\ref{sec:stability} shows that core-stability guarantees only transfer to
a restricted sub-domain -- characterizing stability for the general case remains
open. Fourth, the
mechanism-design discussion in Section~\ref{sec:mechanism} sketches a VCG-style target
rather than a fully specified, budget-balanced mechanism; making it concrete for LLM
agents with noisy, self-reported confidence is nontrivial. Fifth, the dynamic
extension (Section~\ref{sec:dynamic}) compounds credit assignment across a sequence of
coalition changes and is substantially harder than the static problem; we present it
as a direction, not a validated method. Finally, and most importantly, Section~\ref{sec:simulation} is a
synthetic, controlled simulation intended to validate the optimization problem's
structure, not a claim of improvement on real multi-agent LLM pipelines; we regard
the protocol in Section~\ref{sec:validation} as necessary before any deployment claim.

\section{Conclusion}
\label{sec:conclusion}

We formalized agent coalition formation and communication in agentic AI systems as a
cooperative game in which every message, skill activation, and coalition membership
must be justified by its expected marginal contribution net of cost. This
reformulation connects three previously separate concerns -- which agents to
activate, how they should communicate, and how to assign credit for what they
produced -- within a single characteristic function, and lets us bring stability and
incentive-alignment tools from cooperative game theory and mechanism design to bear
on agent orchestration. A controlled synthetic simulation shows the resulting
optimization problem is tractable in practice via a simple greedy heuristic, which
nearly matches a brute-force optimum while using a small fraction of the agents a
full-broadcast architecture would use. The central open question is empirical:
whether a characteristic function of the form in Equation~\eqref{eq:v}, estimated
from real task and agent data, yields routing and communication decisions that
improve real multi-agent LLM systems along the protocol proposed in
Section~\ref{sec:validation}. More agents, and more messages, do not automatically
create more net intelligence; the goal of this framework is to make that trade-off
explicit and optimizable.

\section*{Reproducibility}
The synthetic simulation in Section~\ref{sec:simulation} (agent generation, task
sampling, greedy router, brute-force search, exact and Monte Carlo Shapley
computation, and the sandwich-bound check) and the sensitivity analysis in
Section~\ref{sec:sensitivity} are released as supplementary code alongside this
submission (\texttt{sim.py} and \texttt{sim\_sensitivity.py}), with a fixed random
seed for exact reproducibility of every number reported in the paper. Code for the
empirical validation protocol in Section~\ref{sec:validation}, which requires access
to real LLM agent APIs, is left for the follow-up study that protocol describes.

\bibliographystyle{plainnat}
\bibliography{refs}

\end{document}